\newcommand{\keywords}[1]{\par\addvspace\baselineskip
\noindent\keywordname\enspace\ignorespaces#1}
\begin{document}

\mainmatter  % start of an individual contribution

% first the title is needed
\title{Thresholding Classifiers to Maximize F1 Score}

% a short form should be given in case it is too long for the running head
%\titlerunning{Lecture Notes in Computer Science: Authors' Instructions}

% the name(s) of the author(s) follow(s) next
%
% NB: Chinese authors should write their first names(s) in front of
% their surnames. This ensures that the names appear correctly in
% the running heads and the author index.
%
\author{Zachary C. Lipton \and Charles Elkan \and Balakrishnan Naryanaswamy}
%
%\authorrunning{Lecture Notes in Computer Science: Authors' Instructions}
% (feature abused for this document to repeat the title also on left hand pages)

% the affiliations are given next; don't give your e-mail address
% unless you accept that it will be published
\institute{University of California, San Diego,\\
La Jolla, California, 92093-0404, USA\\
\mailsa \mailsc\\    }
%\url{http://www.springer.com/lncs}
%
% NB: a more complex sample for affiliations and the mapping to the
% corresponding authors can be found in the file "llncs.dem"
% (search for the string "\mainmatter" where a contribution starts).
% "llncs.dem" accompanies the document class "llncs.cls".
%\toctitle{Lecture Notes in Computer Science}
%\tocauthor{Authors' Instructions}
\maketitle

\begin{abstract}
This paper provides new insight into maximizing F1 scores 
in the context of binary classification
and also in the context of multilabel classification. 
The harmonic mean of precision and recall, F1 score is widely used 
to measure the success of a binary classifier when one class is rare. 
Micro average, macro average, and per instance average F1 scores are used in multilabel classification. 
For any classifier that produces a real-valued output, 
we derive the relationship between the best achievable F1 score and 
the decision-making threshold that achieves this optimum. 
As a special case, if the classifier outputs are well-calibrated conditional probabilities, 
then the optimal threshold is half the optimal F1 score. 
As another special case, if the classifier is completely uninformative, 
then the optimal behavior is to classify all examples as positive. 
Since the actual prevalence of positive examples typically is low, 
this behavior can be considered undesirable. 
As a case study, we discuss the results, which can be surprising, 
of applying this procedure when predicting 26,853 labels for Medline documents. 
\end{abstract}

\keywords{machine learning, evaluation methodology, F1-score, multilabel classification, binary classification}

\section{Introduction}

Performance metrics are useful for comparing the quality of predictions across systems. 
Some commonly used metrics for binary classification 
are accuracy, precision, recall, F1 score, and Jaccard index \cite{systematic}.
Multilabel classification is an extension of binary classification
that is currently an area of active research in supervised machine learning \cite{tsoumakas2007multi}. 
Micro averaging, macro averaging, and per instance averaging
are three commonly used variants of F1 score used in the multilabel setting. 
In general, macro averaging increases the impact on final score of performance on rare labels,
while per instance averaging increases the importance of performing well on each example \cite{tanmacro}. 
In this paper, we present theoretical and experimental results on the properties of the F1 metric.%
\footnote{%
For concreteness, the results of this paper are given specifically for the F1 metric
and its multilabel variants.
However, the results can be generalized to F$\beta$ metrics for $\beta \not= 1$.
}

Two approaches exist for optimizing performance on F1. 
Structured loss minimization incorporates the performance metric into the loss function and then optimizes during training. 
In contrast, plug-in rules convert the numerical outputs of a classifier into optimal predictions \cite{plugin}. 
In this paper, we highlight the latter scenario to differentiate between 
the beliefs of a system and the predictions selected to optimize alternative metrics. 
In the multilabel case, we show that the same beliefs can produce markedly dissimilar optimally thresholded predictions depending upon the choice of averaging method. 

That F1 is asymmetric in the positive and negative class is well-known.
Given complemented predictions and actual labels, 
F1 may award a different score.  
It also generally known that micro F1 
is affected less by performance on rare labels, 
while Macro-F1 weighs the F1 of on each label equally \cite{manning2008introduction}. 
In this paper, we show how these properties are manifest 
in the optimal decision-making thresholds 
and introduce a theorem to describe that threshold.
Additionally, we demonstrate that given an uninformative classifier, 
optimal thresholding to maximize F1 predicts all instances positive regardless of the base rate.

While F1 is widely used, some of its properties are not widely recognized. 
In particular, when choosing predictions to maximize the expectation of F1 for a batch of examples, 
each prediction depends not only on the probability that the label applies to that example,
but also on the distribution of probabilities \emph{for all other examples in the batch}. 
We quantify this dependence in Theorem~\ref{th:maxf1}, 
where we derive an expression for optimal thresholds. 
The dependence makes it difficult to relate predictions that are optimally thresholded  for F1
to a system's predicted probabilities.

We show that the difference in F1 score 
between perfect predictions and optimally thresholded random guesses 
depends strongly on the base rate. 
As a result, assuming optimal thresholding and a classifier outputting calibrated probabilities,
predictions on rare labels typically gets a score between close to zero and one,
while scores on common labels will always be high. 
In this sense, macro average F1 can be argued not to weigh labels equally, 
but actually to give greater weight to performance on rare labels.

As a case study, we consider tagging articles in the biomedical literature with MeSH terms, 
a controlled vocabulary of 26,853 labels. 
These labels have heterogeneously distributed base rates. 
We show that if the predictive features for rare labels are lost
(because of feature selection or another cause)
then the optimal threshold to maximize macro F1 leads to predicting these rare labels frequently.
For the case study application, and likely for similar ones, this behavior is far from desirable.

\section{Definitions of Performance Metrics}

\begin{figure}[t]
\[ \begin{array}{l | cc}
\mbox{} & \text{Actual Positive} & \text{Actual Negative} \\
\hline
\mbox{Predicted Positive} & tp & fp \\
\mbox{Predicted Negative} & fn & tn \\
\hline
 \end{array}\] 
\caption{Confusion Matrix}
\label{fig:confusion}
\end{figure}

Consider binary classification in the single or multilabel setting. 
Given training data of the form
$\{ \langle \boldsymbol {x_{1}},\boldsymbol{ y_{1}} \rangle, \hdots, 
\langle \boldsymbol {x_{n}}, \boldsymbol{y_{n} }\rangle  \}$
where each $\boldsymbol {x_{i}}$ is a feature vector of dimension $d$ 
and each $\boldsymbol{y_{i}}$ is a binary vector of true labels of dimension $m$, 
a probabilistic classifier outputs a model which specifies 
the conditional probability of each label applying to each instance given the feature vector. 
For a batch of data of dimension $n \times d$, 
the model outputs an $n \times m$ matrix $C$ of probabilities. 
In the single-label setting, $m = 1$ and $C$ is an $n \times 1$ matrix, i.e.~a column vector.

A decision rule $D(C): \mathbb{R}^{n \times m} \to \{0,1\}^{n \times m}$ 
converts a matrix of probabilities $C$ to binary predictions $P$. 
The gold standard $G \in \mathbb{R}^{n \times m}$ represents 
the true values of all labels for all instances in a given batch.  
A performance metric $M$ assigns a score to a prediction given a gold standard:
\[ 
M(P | G): \{0,1\}^{n \times m} \times \{0, 1\}^{n \times m} \to \mathbb{R} \in [0, 1].
\]
The counts of true positives $tp$, false positives $fp$, false negatives $fn$, 
and true negatives $tn$ are represented via a confusion matrix (Figure~\ref{fig:confusion}).

Precision $p = tp/(tp+fp)$ is the fraction of all positive predictions that are true positives,
while recall $r = tp / (tp + fn)$ is the fraction of all actual positives that are predicted positive.
By definition the F1 score is the harmonic mean of precision and recall:
$F1 = 2/(1/r + 1/p)$. 
By substitution, F1 can be expressed as a function of counts of true positives, false positives and false negatives:
\begin{equation}
	F1 = \frac{2tp}{2tp + fp + fn}.
	\label{eq:f1def}
\end{equation}
The harmonic mean expression for F1 is undefined when $tp=0$, 
but the translated expression is defined. 
This difference does not impact the results below.

\begin{figure}[t]
\centering
\begin{minipage}{.45\textwidth}
	\centering
         \includegraphics[width=2.5in]{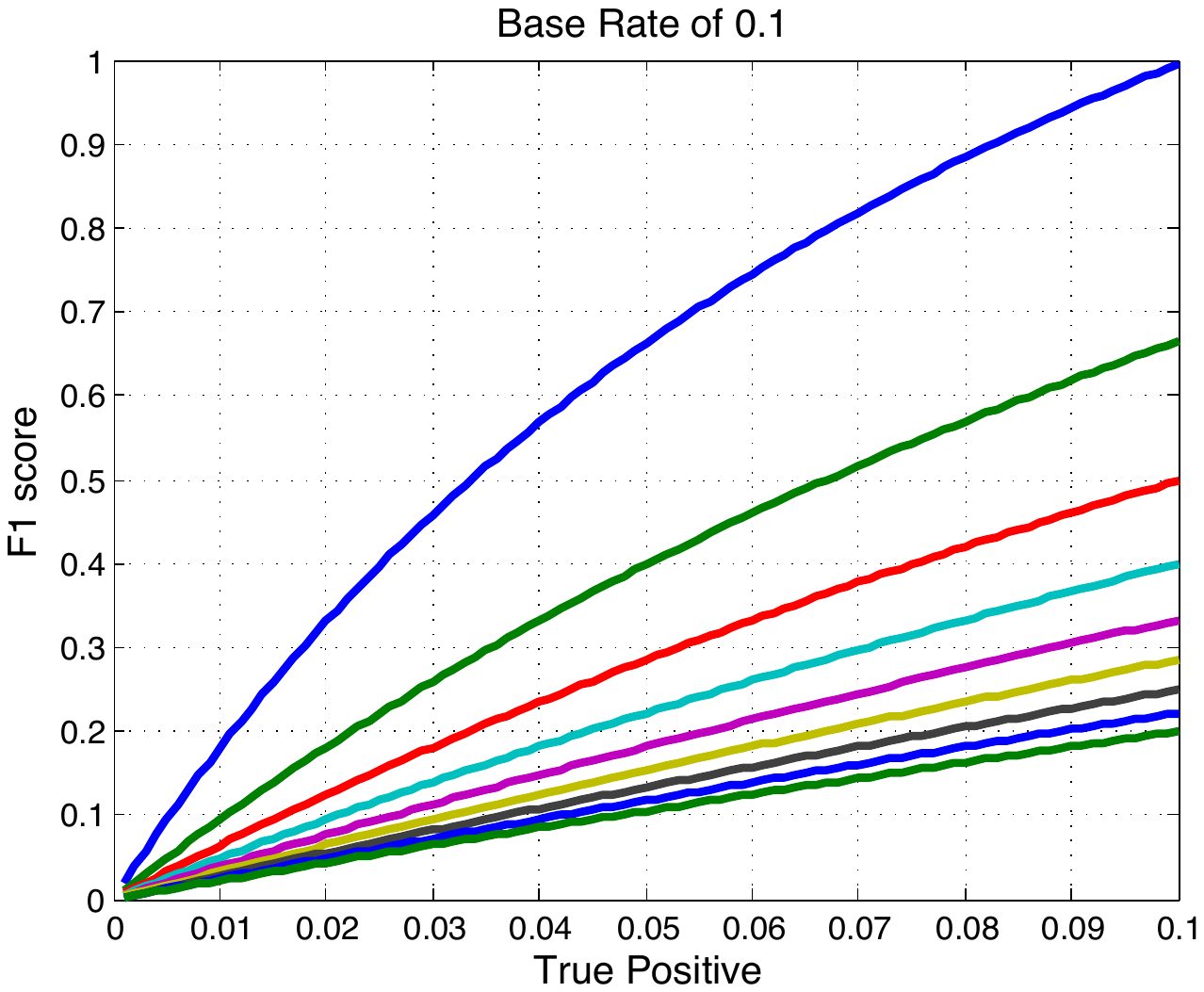}
	\captionof{figure}{Holding base rate and $fp$ constant, F1 is concave in $tp$. 
	Each line is a different value of $fp$.}
	\label{fig:tpconcave}
\end{minipage}%
\hspace{10.0mm}
\begin{minipage}{.45\textwidth}
	\centering
         \includegraphics[width=2.5in]{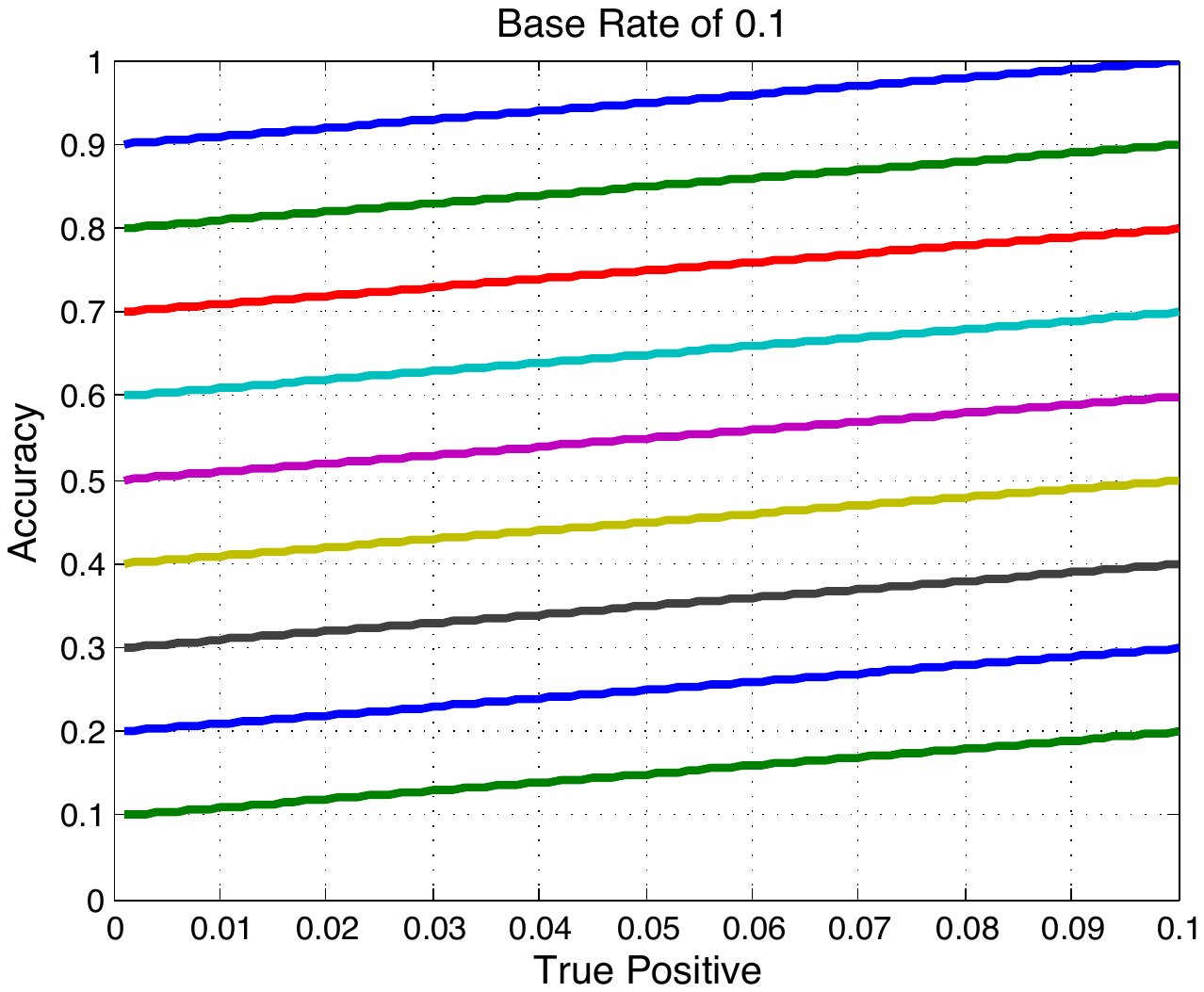}
	\captionof{figure}{Unlike F1, accuracy offers linearly increasing returns. 
	Each line is a fixed value of $fp$.}
	\label{fig:acclinear}
\end{minipage}
\end{figure}
\begin{figure}[t]
\begin{center}
	\includegraphics[width=2.5in]{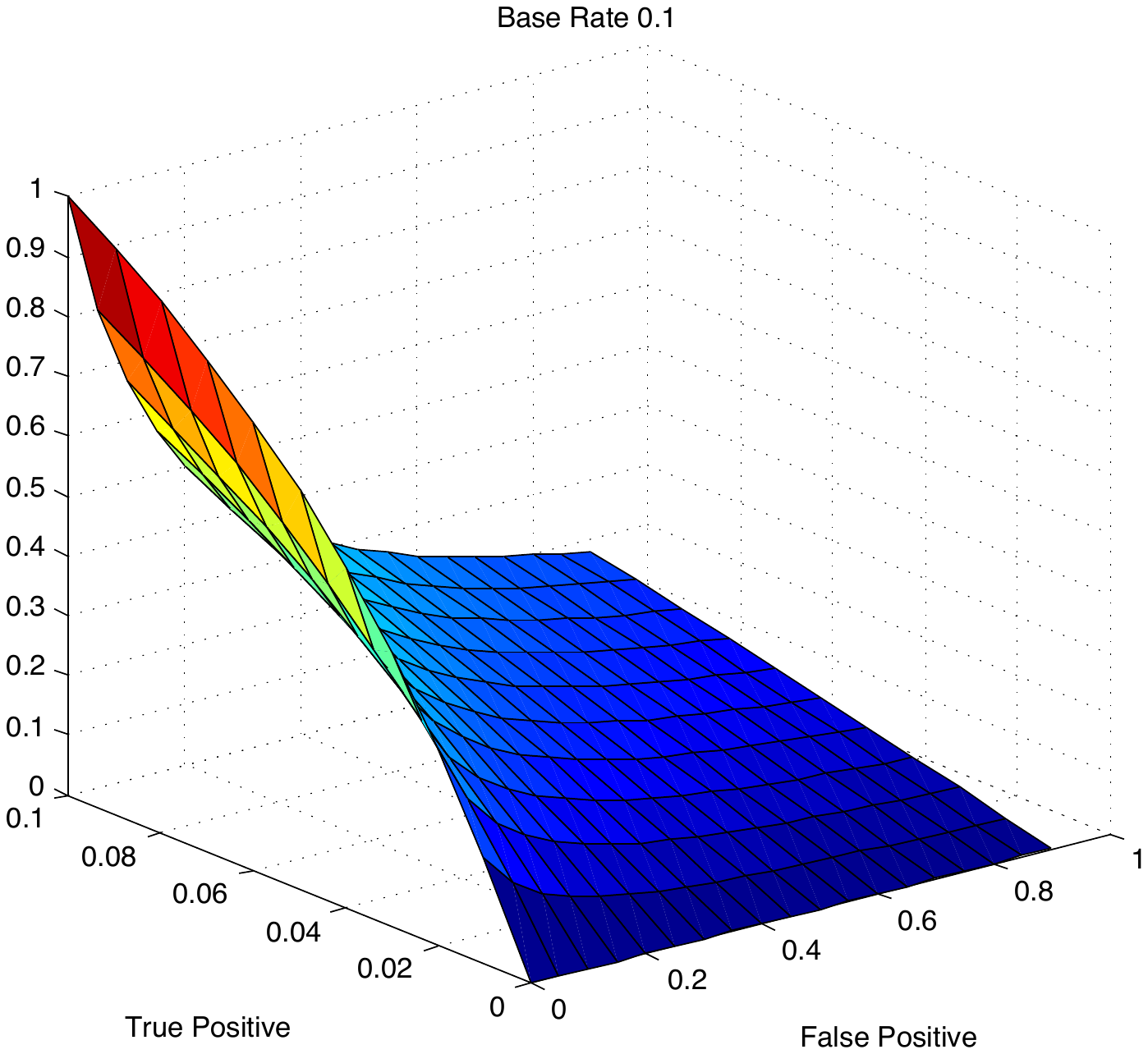}
	\caption{For fixed base rate, F1 is a non-linear function with only two degrees of freedom.}
\end{center}
\end{figure}

\subsection{Basic Properties of F1}

Before explaining optimal thresholding to maximize F1, 
we first discuss some properties of F1. 
For any fixed number of actual positives in the gold standard, 
only two of the four entries in the confusion matrix (Figure~\ref{fig:confusion}) vary independently. 
This is because the number of actual positives is equal to the sum $tp + fn$ 
while the number of actual negatives is equal to the sum $tn + fp$.
A second basic property of F1 is that it is non-linear in its inputs. 
Specifically, fixing the number $fp$, F1 is concave as a function of $tp$ (Figure~\ref{fig:tpconcave}). 
By contrast, accuracy is a linear function of $tp$ and $tn$ (Figure~\ref{fig:acclinear}). 

As mentioned in the introduction, F1 is asymmetric. 
By this, we mean that the score assigned to a prediction $P$ given gold standard $G$ can be arbitrarily different from the score assigned to a complementary prediction $P^{c}$ given complementary gold standard $G^{c}$. 
This can be seen by comparing Figure~\ref{fig:tpconcave} with Figure~\ref{fig:tnconvex}. 
%Specifically we will show that F1 encourages over-predicting the positive class. 
This asymmetry is problematic when both false positives and false negatives are costly. 
For example, F1 has been used to evaluate the classification of tumors as benign or malignant \cite{akay2009support},
a domain where both false positives and false negatives have considerable costs.

\subsection{Multilabel Performance Measures}

While F1 was developed for single-label information retrieval, 
as mentioned there are variants of F1 for the multilabel setting. 
Micro F1 treats all predictions on all labels as one vector 
and then calculates the F1 score.
In particular, 
$$
tp =  2\sum_{i=1}^{n}\sum_{j=1}^{m} \mathbbm{1}(P_{ij}=1)\mathbbm{1}(G_{ij}=1).
$$
We define $fp$ and $fn$ analogously and calculate the final score using (\ref{eq:f1def}). 
Macro F1, which can also be called per label F1,
calculates the F1 for each of the $m$ labels and averages them:
$$
F1_{Macro}(P | G) = \frac{1}{m} {\sum_{j=1}^{m} F1(P_{:j},G_{:j})}.
$$
Per instance F1 is similar but averages F1 over all $n$ examples:
$$
F1_{Instance}(P | G) = \frac{1}{n} {\sum_{i=1}^{n} F1(P_{i:},G_{i:})}.
$$
%\subsection{Alternative Metrics}
Accuracy is the fraction of all instances that are predicted correctly:
\begin{equation*}
Acc = \frac{tp+tn}{tp+tn+fp+fn}.
\label{eq:accuracy}
\end{equation*}
Accuracy is adapted to the multilabel setting by summing $tp$ and $tn$ for all labels 
and then dividing by the total number of predictions:
\[
Acc(P|G) = \frac{1}{nm} \sum_{i=1}^{n}\sum_{j=1}^{m}\mathbbm{1}( P_{ij} = G_{ij}).
\]
Jaccard Index, a monotonically increasing function of F1, 
is the ratio of the intersection of predictions and gold standard to their union:  
$$
Jaccard = \frac{tp}{tp +fn + fp}.
$$
%Skill scores constitute another family of metrics. 
%A skill score is the improvement in performance with respect to 
%a baseline that could be achieved by a trivial system. 
%We construct a skill score from accuracy by letting the baseline be 
%the score achieved by guessing the majority class on each instance:
%\[ 
%Skill(P|G) = \max ( \frac{Acc(P | G)-Acc(V|G)}{1-Acc(V|G)} , 0). 
%\]
%where $V$ is the predictor that guesses the majority class.
%We adapt skill score to the multilabel setting by averaging the skill scores of all labels. 
%We call this macro skill score. 
%The next section shows that, unlike F1, skill score is a linear function of accuracy, 
%so it too is a linear function, and thereby linear in $tp$ and $tn$.
%Note that skill score as defined here is related to Cohen's kappa, a measure of inter-rater agreement. 

\section{Prior Work}
Motivated by the widespread use of F1 in information retrieval 
and in single and multilabel binary classification, 
researchers have published extensively on its optimization.
\cite{jansche2007maximum} propose an outer-inner maximization technique for F1 maximization,
and \cite{jose2009learning} study extensions to the multilabel setting, 
showing that simple threshold search strategies are sufficient 
when individual probabilistic classifiers are independent.
Finally, \cite{dembczynski2011exact} describe how 
the method of \cite{jansche2007maximum} can be extended to efficiently label data points 
even when classifier outputs are dependent. 
More recent work in this direction can be found in \cite{ye2012optimizing}. 
However, none of this work directly identifies the relationship of optimal thresholds 
to the maximum achievable F1 score over all thresholds, as we do here. 
%This supports our thesis that threshold optimization for F1 maximization is often hard to interpret.

While there has been work on applying general constrained optimization techniques to related metrics  \cite{mozer2001prodding}, 
research often focuses on specific classification methods. 
In particular, \cite{suzuki2006training} study F1 optimization for conditional random fields 
and \cite{musicant2003optimizing} perform the same optimization for SVMs. 
In our work, we study the consequences of such optimization for probabilistic classifiers, 
particularly in the multilabel setting.

A result similar to our special case (Corollary 1) was recently derived in \cite{zhao2013beyond}. 
However, their derivation is complex and does not prove our more general Theorem~\ref{th:maxf1} 
which describes the optimal decision-making threshold 
even when the scores output by a classifier are not probabilities. 
Their paper also does not contain the empirical version we derive 
for the multilabel setting in Theorem~\ref{th:f1multi}. 

The batch observation is related to the observation in \cite{lewis1995} that 
given some classifier, a specific example may or may not cross the decision threshold, 
depending on the other examples present in the test data. 
However, they do not identify this threshold as $\frac{F1}{2}$ 
or make use of this fact to explain the differences between 
predictions made to optimize micro and macro average F1.

%The theoretical results in Theorem 1 and Theorem 3 are completely novel to the best of our knowledge.

\begin{figure}[t]
\centering
\begin{minipage}{.45\textwidth}
	\centering
         \includegraphics[width=2.5in]{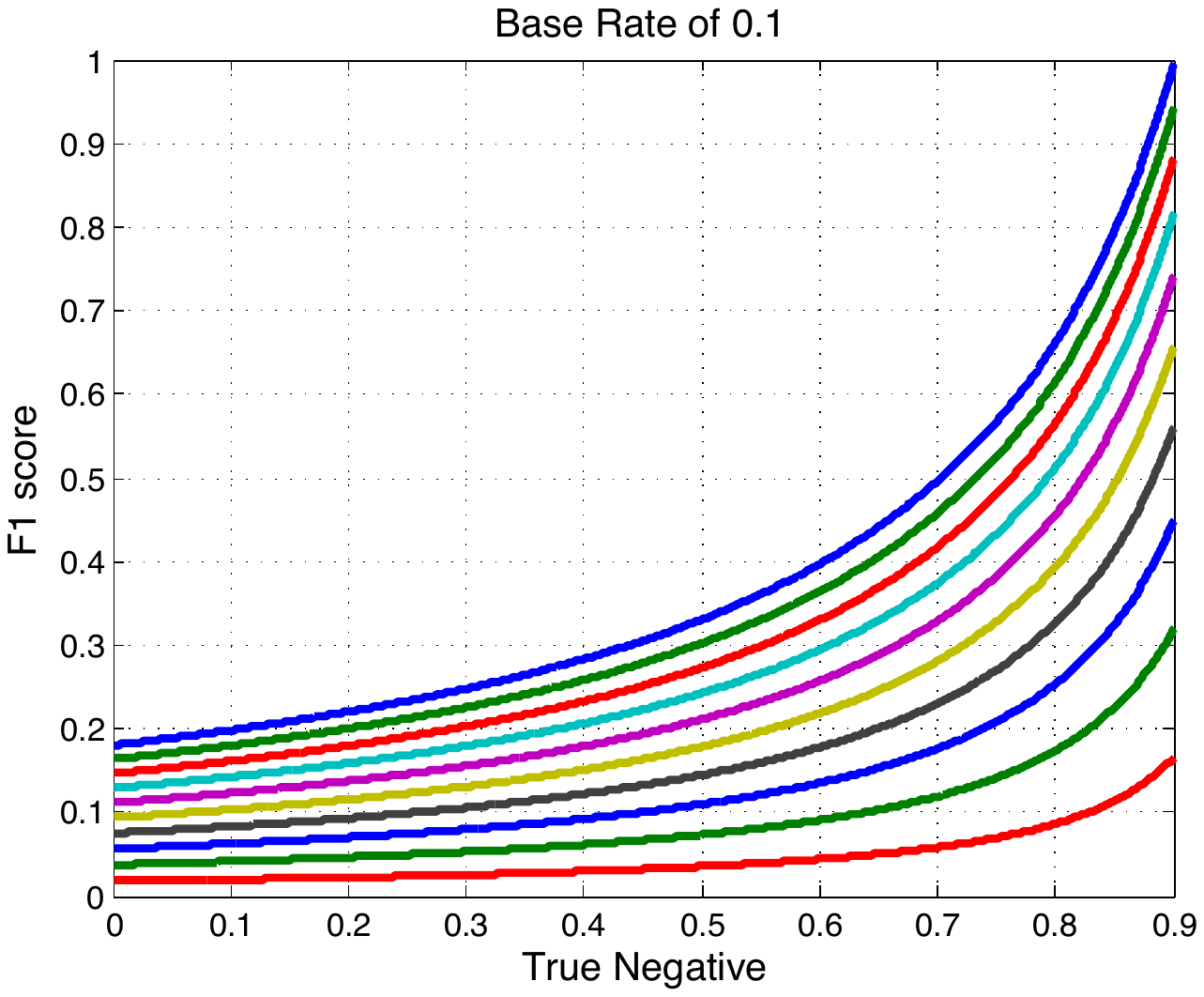}
	\captionof{figure}{F1 score for fixed base rate and number $fn$ of false negatives. 
	F1 offers increasing marginal returns as a function of $tn$. Each line is a fixed value of $fn$.}
	\label{fig:tnconvex}
\end{minipage}%
\hspace{10.0mm}
\begin{minipage}{.45\textwidth}
	\centering
	\includegraphics[width=2.5in]{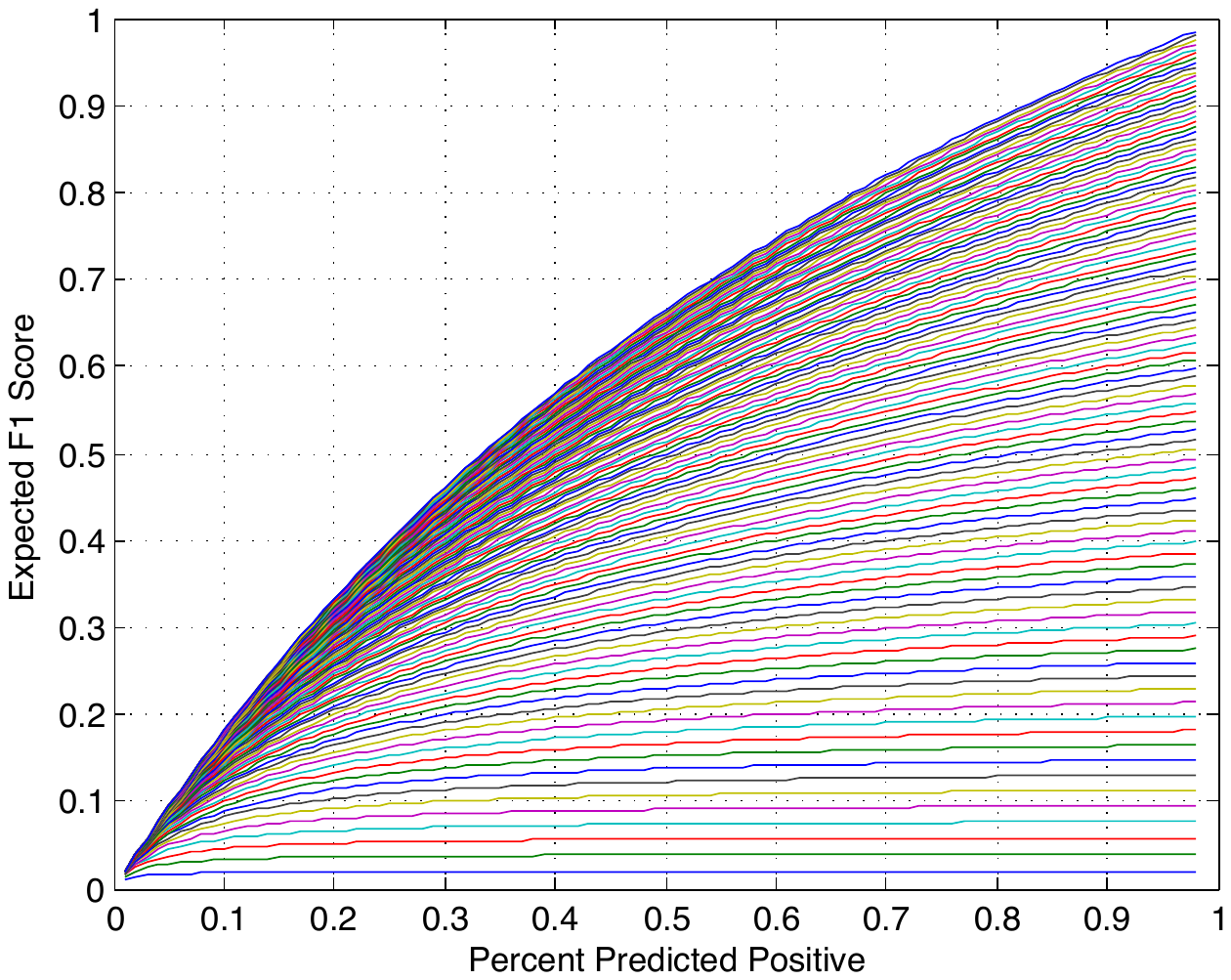}
	\captionof{figure}{The expected F1 score of an optimally thresholded random guess 
	is highly dependent on the base rate. }
	\label{fig:uninformative}
\end{minipage}
\end{figure}
%
%\begin{figure}[t]
%\begin{center}
%	\includegraphics[width=2.5in]{}
%	\caption{The complicated landscape of cost-sensitive decision making.}
%	\label{fig:2class}
%\end{center}
%\end{figure}
\section{Optimal Decision Regions for F1 Maximization}

In this section, we provide a characterization of the optimal decision regions that maximize F1
and, for a special case, 
we present a relationship between the optimal threshold and the maximum achievable F1 score. 

%We first describe our framework, depicted in Figure~\ref{fig:2class}. 
We assume that the classifier outputs real-valued scores $s$ %\in[0,S]$ 
and that there exist two distributions $p(s|t=1)$ and $p(s|t=0)$
that are  the conditional probability of seeing the score $s$ 
when the true label $t$ is $1$ or $0$, respectively. 
We assume that these distributions are known in this section;
the next section discusses an empirical version of the result.
Note also that in this section $tp$ etc.~are fractions that sum to one, not counts.

Given $p(s|t=1)$ and $p(s|t=0)$, 
we seek a decision rule $D: s \rightarrow \{0,1\}$ mapping scores to class labels 
such that the resultant classifier maximizes F1. 
We start with a lemma that is valid for any $D$.

\begin{lemma}
The true positive rate $tp = b\int_{s:D(s)=1} p(s|t=1) ds$
where $b = p(t=1)$ is the base rate.
\end{lemma}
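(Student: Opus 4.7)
The plan is to recognize that in this section $tp$ has been redefined as a joint probability (a fraction that sums to one, not a count), so the lemma is essentially an unpacking of definitions via conditional probability. First I would write $tp$ as the joint probability that an instance is both actually positive and predicted positive, that is $tp = \Pr(t=1,\, D(s)=1)$. This is just the measure-theoretic restatement of the cell of the confusion matrix in the regime where we normalize counts to probabilities.

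Next I would factor this joint probability by conditioning on the true label: $\Pr(t=1,\, D(s)=1) = \Pr(t=1)\,\Pr(D(s)=1 \mid t=1) = b\,\Pr(D(s)=1 \mid t=1)$, using the definition $b = p(t=1)$. Then I would express the conditional probability that the decision rule fires on a positive example as an integral against the class-conditional score density $p(s \mid t=1)$: the event $\{D(s)=1\}$ pulls back through the score to the region $\{s : D(s) = 1\}$, giving $\Pr(D(s)=1 \mid t=1) = \int_{s:\, D(s)=1} p(s \mid t=1)\,ds$. Combining these two identities yields the claim.

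There is essentially no obstacle: the only subtlety is the implicit modelling assumption that $D$ depends on the instance only through its score $s$, so that $\{D=1\}$ is measurable with respect to $s$ and can be written as an integral of $p(s \mid t=1)$. Since the section explicitly sets up $D: s \rightarrow \{0,1\}$ as a map from scores to labels, this is built in, and no measurability issues arise for any sensible decision rule (e.g.\ a threshold or a union of intervals). I would state this normalization and score-measurability assumption once at the start of the proof, then present the two-line derivation described above.
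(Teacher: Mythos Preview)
Your proposal is correct. The paper's proof reaches the same conclusion via a slightly different factorization: it starts from $tp = \int_{s:D(s)=1} p(t=1\mid s)\,p(s)\,ds$, applies Bayes' rule to write $p(t=1\mid s) = p(s\mid t=1)\,p(t=1)/p(s)$, cancels $p(s)$, and pulls out $b=p(t=1)$. You instead condition on $t$ first and write $tp = b\,\Pr(D(s)=1\mid t=1)$ directly, which is marginally more elementary since it avoids invoking Bayes' rule; both arguments are one-line unpackings of the same joint probability and there is no substantive difference.
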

\begin{proof}
Clearly $tp= \int_{s:D(s)=1} p(t=1|s) p(s) ds$.
Bayes rule says that $p(t=1|s) = p(s|t=1)p(t=1)/p(s)$.
Hence $tp = b \int_{s:D(s)=1} p(s|t=1) ds$.
\end{proof}
Using three similar lemmas, the entries of the confusion matrix are
\begin{eqnarray*}
tp & = & b\int_{s:D(s)=1} p(s|t=1) ds\\
fn & = & b\int_{s:D(s)=0} p(s|t=1) ds
\end{eqnarray*}
\begin{eqnarray*}
fp & = & (1-b)\int_{s:D(s)=1} p(s|t=0) ds\\
tn & = & (1-b)\int_{s:D(s)=0} p(s|t=0) ds.
\end{eqnarray*}
The following theorem describes the optimal decision rule that maximizes F1.
\begin{theorem}
A score $s$ is assigned to the positive class, that is $D(s)=1$,
by a classifier that maximizes F1 if and only if
\begin{equation}
\frac{b \cdot p(s|t=1)}{(1-b) \cdot p(s|t=0)} \geq J 
\label{eq:jac}
\end{equation}
where $J=\frac{tp}{fn+tp+fp}$ is the Jaccard index of the optimal classifier,
with ambiguity given equality in~(\ref{eq:jac}).
\label{th:maxf1}
\end{theorem}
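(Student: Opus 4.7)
My plan is to reduce the claim to a perturbation argument on the Jaccard index. First I would observe that since $F1 = 2J/(1+J)$ is a strictly increasing function of $J$, any decision rule that maximizes F1 also maximizes the Jaccard index $J = tp/(tp+fp+fn)$, and conversely. So it suffices to characterize the decision rule that maximizes $J$, using the integral expressions for $tp$, $fp$, $fn$ derived from the lemma.

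Next I would use a local perturbation argument. Let $D^*$ be an optimal rule with Jaccard index $J^*$, and consider an infinitesimal score region $[s, s+ds]$. Using the lemma-style integrals, flipping the decision on this region changes each confusion-matrix entry by a term proportional to either $b\,p(s|t=1)\,ds$ or $(1-b)\,p(s|t=0)\,ds$. If $D^*(s) = 0$, then including this region in the positive class changes $J$ to
\[
\frac{tp^* + b\,p(s|t=1)\,ds}{tp^* + fp^* + fn^* + (1-b)\,p(s|t=0)\,ds},
\]
because the increase in $tp$ cancels the decrease in $fn$ in the denominator, leaving only the $fp$ contribution. Optimality forces this to be $\leq J^*$. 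Cross-multiplying and canceling $tp^*(tp^*+fp^*+fn^*)$ yields exactly $b\,p(s|t=1)/[(1-b)\,p(s|t=0)] \leq J^*$. Symmetrically, if $D^*(s) = 1$, removing the region from the positive class gives a new Jaccard
\[
\frac{tp^* - b\,p(s|t=1)\,ds}{tp^* + fp^* + fn^* - (1-b)\,p(s|t=0)\,ds},
\]
and optimality forces the reverse inequality $b\,p(s|t=1)/[(1-b)\,p(s|t=0)] \geq J^*$. Combining the two cases recovers equation~(\ref{eq:jac}), with ties on the boundary reflecting the stated ambiguity.

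The main subtlety, and the step I would spend the most care on, is the apparent circularity: the threshold $J^*$ on the right side of (\ref{eq:jac}) depends on the rule $D^*$ itself, so one has to argue that the inequalities above really do characterize any optimal rule rather than merely defining a fixed-point condition. The way I would handle this is to treat $J^*$ as a fixed positive constant (the value actually achieved by an optimum, which exists by boundedness of $J$ and compactness arguments on the space of decision rules) and verify that any deviation from the stated rule yields a strictly smaller Jaccard via the same algebra. A secondary subtlety is justifying the infinitesimal perturbation on a measure-zero level set; in practice this can be dealt with by considering small measurable sets $A$ on which $D^*$ disagrees with the proposed rule and showing the finite version of the cross-multiplication inequality still forces $A$ to have measure zero.
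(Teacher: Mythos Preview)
Your proposal is correct and rests on the same local-perturbation idea as the paper, but with two small differences worth noting. First, you reduce to maximizing the Jaccard index via the monotone relation $F1 = 2J/(1+J)$, which streamlines the algebra; the paper instead works directly with F1, writing down $F1'$ (region $\Delta$ assigned positive) and $F1''$ (region $\Delta$ assigned negative) and simplifying the inequality $F1' \geq F1''$. Second, you frame the perturbation as ``flip an infinitesimal region away from the optimal rule $D^*$ and check that $J$ cannot increase,'' whereas the paper frames it as ``leave one region $\Delta$ undecided, then choose whichever assignment yields the larger score''; both framings collapse to the same inequality in the limit $|\Delta|\to 0$. Your explicit attention to the fixed-point circularity and to measure-zero level sets goes beyond what the paper addresses: the paper's proof simply takes the limit without discussing existence of the optimum or the self-referential nature of the threshold, so your version is, if anything, more careful on those points.
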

Before we provide the proof of this theorem,
we note the difference between the rule in (\ref{eq:jac}) 
and conventional cost-sensitive decision making \cite{elkan2001foundations} 
or Neyman-Pearson detection.
In the latter, the right hand side $J$ is replaced by 
a constant $\lambda$ that depends only on the costs of $0-1$ and $1-0$ classification errors,
and {not on the performance of the classifier on the entire batch}. 
We will later elaborate on this point, and describe how this relationship leads to potentially undesirable thresholding behavior 
for many applications in the multilabel setting.  

\begin{proof}
Divide the domain of $s$ into regions of size $\Delta$. 
Suppose that the decision rule $D(\cdot)$ has been fixed for all regions 
except a particular region denoted $\Delta$ around a point (with some abuse of notation) $s$.
Write $P_{1}(\Delta)=\int_{\Delta} p(s|t=1)$ and  define $P_{0}(\Delta)$ similarly.

Suppose that the F1 achieved with decision rule $D$ for all scores besides $D(\Delta)$ 
is  $F1=\frac{2tp}{2tp+fn+fp}$. 
Now, if we add $\Delta$ to the positive part of the decision rule, $D(\Delta)=1$, 
then the new F1 score will be
\[
F1' = \frac{2tp + 2bP_1(\Delta)}{2tp + 2bP_{1}(\Delta) + fn + fp + (1-b)P_0(\Delta)}.
\]
On the other hand, if we add $\Delta$ to the negative part of the decision rule, $D(\Delta)=0$, 
then the new F1 score will be
\[
F1'' = \frac{2tp}{2tp + fn + bP_1(\Delta) + fp}.
\]
We add $\Delta$ to the positive class only if $F1'\geq F1''$. 
With some algebraic simplification, 
this condition becomes
\[
\frac{bP_{1}(\Delta)}{(1-b)P_{0}(\Delta)} \geq \frac{tp}{tp+fn+fp }.
\]
Taking the limit $|\Delta| \rightarrow 0$ gives the claimed result.
\end{proof}
If, as a special case, the model outputs {calibrated} probabilities, 
that is $p(t=1|s)=s$ and $p(t=0|s)=1-s$, 
then we have the following corollary.

\begin{corollary}
%\cite{elkanthreshold} 
An instance with predicted probability $s$ is assigned to the positive class 
by the optimal decision rule that maximizes F1 if and only if $s \geq F/2$
%\begin{equation*}
%s \geq \frac{F}{2}
%\end{equation*}
where $F=\frac{2tp}{2tp+fn+fp}$ is the F1 score achieved by this optimal decision rule.
\end{corollary}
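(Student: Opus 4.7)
The plan is to specialize the general threshold condition from Theorem~\ref{th:maxf1} under the calibration assumption, and then translate the resulting bound from the Jaccard index $J$ back into the F1 score $F$.

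First, I would apply Bayes' rule in reverse to rewrite the class-conditional densities in terms of the posterior $p(t=1|s)$. Concretely, $p(s|t=1) = p(t=1|s)\,p(s)/b$ and $p(s|t=0) = p(t=0|s)\,p(s)/(1-b)$. Substituting into the likelihood ratio on the left-hand side of~(\ref{eq:jac}) causes the base rate $b$, its complement $1-b$, and the marginal $p(s)$ to cancel cleanly, leaving $\frac{b\,p(s|t=1)}{(1-b)\,p(s|t=0)} = \frac{p(t=1|s)}{p(t=0|s)}$. Under calibration this is exactly $s/(1-s)$.

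Next, I would rewrite the threshold condition $s/(1-s) \geq J$ as $s \geq J/(1+J)$, which is valid since the right-hand side is monotone in $J \in [0,1]$. The final step is a short algebraic identity: with $J = tp/(tp+fn+fp)$ and $F = 2tp/(2tp+fn+fp)$, one checks that $J/(1+J) = tp/(2tp+fn+fp) = F/2$. Chaining these three equalities/inequalities yields $s \geq F/2$ as the optimal decision rule, with the usual tie-breaking ambiguity inherited from Theorem~\ref{th:maxf1}.

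I do not expect any real obstacle here, since this is essentially a two-line computation once Theorem~\ref{th:maxf1} is in hand; the only thing to be careful about is the direction of Bayes' rule (making sure the base rate and marginal really do cancel) and the algebraic rearrangement linking $J$ to $F$. The content of the corollary is therefore best presented as a direct consequence, emphasizing the elegant cancellation under calibration and the resulting interpretation that the optimal threshold is exactly half of the F1 score it attains.
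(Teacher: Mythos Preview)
Your proposal is correct and follows essentially the same route as the paper: both invoke Bayes' rule under calibration to reduce the likelihood-ratio condition in Theorem~\ref{th:maxf1} to $s/(1-s) \geq J$, and then algebraically simplify to $s \geq F/2$. Your writeup is slightly more explicit about the intermediate identity $J/(1+J) = F/2$, which is a nice touch, but the argument is the same.
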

\begin{proof}
Using the definition of calibration and then Bayes rule,
for the optimal decision surface for assigning a score $s$ to the positive class
\begin{equation}
\frac{p(t=1|s)}{p(t=0|s)} = \frac{s}{1-s} = \frac{p(s|t=1)b}{p(s|t=0)(1-b)}.
\label{eq:bay}
\end{equation}
Incorporating (\ref{eq:bay}) in Theorem~\ref{th:maxf1} gives
\begin{equation*}
\frac{s}{1-s} \geq \frac{tp}{fn+tp+fp}.
\end{equation*}
Simplifying results in
\begin{equation*}
s \geq \frac{tp}{2tp+fn+fp} = \frac{F}{2}.
\end{equation*} 
\end{proof}
Thus, the optimal threshold in the calibrated case is half the maximum $F1$. 

Above, we assume that scores have a distribution conditioned on the true class. 
Using the intuition in the proof of Theorem~\ref{th:maxf1}, 
we can also derive an empirical version of the result. 
To save space, we provide a more general version of the empirical result in the next section for multilabel problems, noting that a similar non-probabilistic statement holds for the single label setting as well.

\subsection{Maximizing Expected F1 Using a Probabilistic Classifier}

The above result can be extended to the multilabel setting with dependence. 
We give a different proof that confirms the optimal threshold for empirical maximization of F1. 

We first present an algorithm from \cite{dembczynski2011exact}.
Let $\textbf{s}$ be the output vector of length $n$ scores from a model, 
to predict $n$ labels in the multilabel setting. 
Let $\textbf{t}\in\{0,1\}^{n}$ be the gold standard and $\textbf{h}\in\{0,1\}^{n}$ be
the thresholded output for a given set of $n$ labels. 
In addition, define $a=tp+fn$, the total count of positive labels in the gold standard 
and $c=tp +fp$ the total count of predicted positive labels. 
Note that $a$ and $c$ are functions of $\textbf{t}$ and $\textbf{h}$, 
though we suppress this dependence in notation. 
Define $\mathbf{z}^{a}=\sum_{\mathbf{t}:tp+fn=a}\mathbf{t}p(\mathbf{t})$. 
The maximum achievable macro F1 is
\begin{eqnarray*}
F1 &=& \max_{c} \max_{\textbf{h}:tp +fp = c} \mathbbm{E}_{p(\textbf{t}|\textbf{s})} \left[\frac{2tp}{2tp + fp + fn}\right] \\
&=& \max_{c} \max_{\textbf{h}:tp +fp = c} 2\textbf{h}^{T} \sum_{a} \frac{\mathbf{z}^{a}}{a+c} 
\label{eq:f1preconc}.
\end{eqnarray*}

Algorithm: Loop over the number of predicted positives $c$. 
Sort the vector $\sum_{a} \frac{\mathbf{z}^{a}}{a+c}$ of length $n$. 
Proceed along its entries one by one. 
Adding an entry to the positive class increases the numerator by $\mathbf{z}^{a}$, which is always positive. 
Stop after entry number $c$. 
Pick the $c$ value and corresponding threshold which give the largest F1. 

Some algebra gives the following interpretation:
\[ 
\max_{c} \mathbbm{E}(F1) =  \max_{c} \sum_{a}\frac{\mathbbm{E}(tp | c)}{a + c}  p(a).
\]
\begin{theorem}
\label{th:f1multi}
The stopping threshold will be $\max \mathbbm{E}_{p(\textbf{y}|\textbf{s})} [\frac{F1}{2}]$. 
\end{theorem}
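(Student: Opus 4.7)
The plan is to port the marginal-swap argument from Theorem~1 and Corollary~1 over to the algorithmic setting of this section, using the key identity that is implicit in the derivation just above the theorem, namely
$$
\mathbbm{E}_{p(\textbf{t}|\textbf{s})}\!\left[\tfrac{F1}{2}\,\big|\,\mathbf{h},c\right]
\;=\;\sum_{j\in S(\mathbf{h})} v^{c}_{j},
\qquad
v^{c}_{j}:=\sum_{a}\frac{z^{a}_{j}}{a+c},
$$
for any decision vector $\mathbf{h}$ with $|S(\mathbf{h})|=c$. Halving the formula $\mathbbm{E}[F1\mid\mathbf{h},c]=2\mathbf{h}^{T}\sum_{a}\mathbf{z}^{a}/(a+c)$ gives this identity, and at the optimum the right-hand side equals $\tfrac12\max\mathbbm{E}[F1]$, which is what the theorem calls the stopping threshold.

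The first step is to verify that the algorithm is correct. For each fixed $c$, the inner loop maximizes $\sum_{j\in S}v^{c}_{j}$ over $|S|=c$; since the objective is additive in $j$, the top-$c$ coordinates of $\mathbf{v}_{c}$ attain this maximum. The outer loop then attains the global maximum $\max_{c,S}\sum_{j\in S} v^{c}_{j}$, which by the identity above is $\max\mathbbm{E}_{p(\textbf{y}|\textbf{s})}[F1/2]$. This already gives the theorem in the weak reading where the ``stopping threshold'' is the accumulated half-F1 at termination.

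The second step is to derive the stronger, Corollary-1-style reading in which the threshold is imposed on the per-label scores. At the optimum $(c^{*},S^{*})$ one writes down the swap inequalities $\mathbbm{E}[F1\mid S^{*}\cup\{k\}]\le\mathbbm{E}[F1\mid S^{*}]$ for $k\notin S^{*}$ and $\mathbbm{E}[F1\mid S^{*}\setminus\{k\}]\le\mathbbm{E}[F1\mid S^{*}]$ for $k\in S^{*}$, expands them with the $v^{c}_{j}$ identity, and sums over $a$ using $p(t_{k}{=}1\,|\,\textbf{s})=\sum_{a}z^{a}_{k}$. This mirrors the Bayes-rule reduction that takes (\ref{eq:jac}) to $s\ge F/2$ in the proof of Corollary~1; the role played there by the Jaccard index $J$ is played here by the empirical quantity $\sum_{j\in S^{*}}v^{c^{*}}_{j}=F1^{*}/2$.

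The main obstacle lies in this second step: the denominators $a+c$ shift by $\pm1$ under the swap, so the differences do not telescope term by term. The trick I would use is to group all of the $a$-dependent denominators before comparing, exploit the already-established identity $\sum_{j\in S^{*}}v^{c^{*}}_{j}=F1^{*}/2$ to absorb the cross-terms, and read off the threshold $F1^{*}/2$ from what remains. A subsidiary cost is that the two-label symmetric example (where $S^{*}$ contains every coordinate) shows the naive reading ``the smallest included value $v^{c^{*}}_{(c^{*})}$ equals $F1^{*}/2$'' is false in general, so care is needed to state which object is actually being thresholded, namely the posterior $p(t_{k}{=}1\,|\,\textbf{s})$ rather than the transformed quantity $v^{c^{*}}_{k}$.
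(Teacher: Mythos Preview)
The paper does not actually supply a proof of this theorem: it is stated bare, immediately after the algorithm and the displayed identity $\max_c\mathbbm{E}(F1)=\max_c\sum_a\mathbbm{E}(tp\mid c)\,p(a)/(a+c)$, and the very next line opens a new subsection. The only hint is the earlier sentence ``Using the intuition in the proof of Theorem~\ref{th:maxf1}, we can also derive an empirical version of the result,'' but that derivation is never written down. So you are not comparing against a paper proof so much as supplying one.

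Your step~1 (the weak reading) is correct and is essentially all the paper implicitly contains. Halving the displayed expression for $\mathbbm{E}[F1\mid\mathbf{h},c]$ gives $\mathbbm{E}[F1/2\mid\mathbf{h},c]=\sum_{j\in S(\mathbf{h})}v_j^c$; the inner maximum over $|S|=c$ is attained by the top-$c$ coordinates of $\mathbf{v}^c$; and the outer loop over $c$ then returns $\max\mathbbm{E}[F1/2]$ at termination. Under this reading your argument is complete and matches the paper's intent.

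Your step~2 (the Corollary-1-style reading) goes beyond the paper, and the obstacle you flag is genuine and not resolved in your proposal. Passing from $c^*$ to $c^*\pm1$ shifts every denominator $a+c$ at once, so the swap inequalities $\mathbbm{E}[F1\mid S^*\cup\{k\}]\le\mathbbm{E}[F1\mid S^*]$ do not reduce term-by-term to a comparison of $p(t_k{=}1\mid\mathbf{s})$ against $F1^*/2$. You describe the intended fix (``group the $a$-dependent denominators \ldots\ absorb the cross-terms'') but do not carry it out, and it is not clear it goes through exactly in the discrete case: the proof of Theorem~\ref{th:maxf1} relies on $|\Delta|\to0$ so that only first-order terms survive, whereas here each label contributes a unit change to $c$. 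Your two-label symmetric counterexample correctly kills the naive reading that $v^{c^*}_{(c^*)}=F1^*/2$, but it also leaves open whether the strong reading holds exactly or only asymptotically---a question the paper itself does not address.
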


%\subsection{A Faster Algorithm to Estimating the Maximum Expected F1 Under Independence Assumptions}
%Consider the expression in (\ref{eq:f1preconc}). This requires knowing the number of positives in the gold standard $a$. Suppose the labels $y_{i}$ in $\textbf{y}$ are independent, and positive w.p. $p_{i}$. Then $E[a]=\sum_{i} p_{i}$. In addition, we have the following concentration result (where $KL$ is the Kullback-Leibler divergence)
%\begin{equation}
%Pr[|a-\sum_{i} p_{i}| \geq \epsilon] \leq \max_{q_{1},\ldots,q_{n}:|\textbf{q}^{T}\textbf{1}-\textbf{p}^{T}\textbf{1}|=\epsilon} 2^{-\sum_{i} KL(p_{i}||q_{i})}
%\end{equation}
%This result is a consequence of Sanov's theorem \cite{cover2012elements}.When all the $p_{i}$s are equal this gives the usual Chernoff bound on i.i.d. random variables.
%
%Given this concentration result on the number of positives, we can estimate the maximum expected $F1$ as follows. Sort the (calibrated) scores in descending order. Proceed along the scores. Setting a threshold below a score $s_{i}$ generates 
%
%
%Prove that for certain conditions (distributions of probabilities and size of N) we can bound $p(\mathbbm{E}(a)/a) \le \varepsilon$. Using this approximation (not expectation) of $a$, we can quickly calculate approximate max expectation of F1 by iterating over all possible placements of the threshold in $O(n)$ time. 
%

\subsection{Consequences of F1 Optimal Classifier Design}

We demonstrate two consequences of designing classifiers that maximize F1. 
These are the ``batch observation" and the ``uninformative classifier observation."
We will later demonstrate with a case study that these can combine 
to produce surprising and potentially undesirable optimal predictions when macro F1 is optimized in practice.

The \textbf{batch observation} is that a label may or may not be predicted for an instance 
depending on the distribution of other probabilities in the batch. 
Earlier, we observed a relationship between the optimal threshold and the maximum $\mathbbm{E}(F1)$ 
and demonstrated that the maximum $\mathbbm{E}(F1)$ is related to 
the distribution of probabilities for all predictions. 
Therefore, depending upon the distribution in which an instance is placed, 
it may or may not exceed the optimal threshold. 
Note that because F1 can never exceed 1, the optimal threshold can never exceed .5. 

Consider for example an instance with probability 0.1. 
It will be predicted positive if it has the highest probability of all instances in a batch. 
However, in a different batch, where the probabilities assigned to all other elements are 0.5 and $n$ is large, 
the maximum $\mathbbm{E}(F1)$ would be close to 2/3. 
According to the theorem, we will predict positive on this last instance only if it has a probability greater than 1/3. 

%\subsection{Uninformative Classifiers}
An uninformative classifier is one that assigns the same score to all examples. 
If these scores are calibrated probabilities, the base rate is assigned to every example.  

\begin{theorem}
Given an uninformative classifier for a label, optimal thresholding to maximize F1 results in predicting all examples positive.
\label{th:uninformative}
\end{theorem}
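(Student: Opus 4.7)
The plan is to reduce the claim to a comparison of two candidate decision rules by invoking Theorem~\ref{th:maxf1}. The defining property of an uninformative classifier is that the score $s$ carries no information about the true label, so the class-conditional densities coincide: $p(s \mid t{=}1) = p(s \mid t{=}0)$ for every $s$ in the support. Substituting this identity into inequality~(\ref{eq:jac}), the left-hand side collapses to the constant $b/(1-b)$, which no longer depends on $s$. Consequently the optimal decision rule cannot discriminate between scores and must be either ``predict positive everywhere'' or ``predict negative everywhere.''

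Having reduced to these two candidates, I would compare their F1 values directly using the confusion-matrix expressions from the lemmas preceding Theorem~\ref{th:maxf1}. The all-negative rule gives $tp = fp = 0$, which yields $F1 = 0$ once $b > 0$. The all-positive rule gives $tp = b$, $fp = 1-b$, $fn = 0$, and hence $F1 = 2b/(1+b)$, which is strictly positive whenever $b > 0$. Thus the all-positive rule strictly dominates, proving the theorem in the non-degenerate case. The degenerate case $b = 0$ is vacuous since every rule then achieves F1 equal to $0$.

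As a final consistency check I would plug the all-positive outcome back into the right-hand side of Theorem~\ref{th:maxf1}: with $tp = b$, $fn = 0$, $fp = 1-b$ the Jaccard index becomes $J = b$, and the required inequality $b/(1-b) \geq b$ holds for all $b \in [0,1)$. No step presents a real obstacle; the only subtlety is that the harmonic-mean form of F1 is undefined for the all-negative rule, but using the equivalent form~(\ref{eq:f1def}) resolves this cleanly. The conceptual core of the argument is the one-line substitution that makes the optimality condition independent of $s$, forcing the all-or-nothing structure.
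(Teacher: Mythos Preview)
Your argument is correct but takes a genuinely different route from the paper. The paper does \emph{not} invoke Theorem~\ref{th:maxf1}. Instead it works directly in the finite-sample setting: by symmetry of an uninformative classifier the only free variable is the number $c$ of examples predicted positive, so the paper computes
\[
\mathbb{E}(F1)=\frac{2cb}{a+c}
\]
with $a=tp+fn$ and $b=a/n$, differentiates with respect to $c$, and shows the derivative is strictly positive for all $c$. Hence $\mathbb{E}(F1)$ is monotone in $c$ and the maximum occurs at $c=n$.

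Your approach, by contrast, is structural: you feed the uninformativeness hypothesis $p(s\mid t{=}1)=p(s\mid t{=}0)$ into the optimality characterization of Theorem~\ref{th:maxf1}, collapse the likelihood ratio to the constant $b/(1-b)$, and deduce that the optimal rule is constant. One small point worth tightening: when the constant equals the optimal Jaccard $J$ exactly, Theorem~\ref{th:maxf1}'s ``ambiguity given equality'' clause does not by itself force the rule to be all-or-nothing; you should note that this boundary case can only occur when $b=0$, which you already treat as degenerate. With that remark your reduction to two candidates is airtight.

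What each approach buys: the paper's calculus argument is self-contained (it does not depend on Theorem~\ref{th:maxf1}) and yields the stronger monotonicity statement that predicting more positives is \emph{always} better, not merely that all-positive beats all-negative. Your argument is shorter once Theorem~\ref{th:maxf1} is available and makes transparent \emph{why} the threshold degenerates---the likelihood ratio carries no information---which connects the result more tightly to the paper's main theme.
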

\begin{proof}
Given an uninformative classifier,  we seek the optimal threshold that maximizes $\mathbb{E}(F1)$. 
The only choice is how many labels to predict. 
By symmetry between the instances, it doesn't matter which instances are labeled positive.

Let $a = tp + fn$ be the number of actual positives
and let $c = tp + fp$ be the number of positive predictions.
The denominator of the expression for F1 in Equation~(\ref{eq:f1def}), 
that is $2tp + fp + fn = a + c$, is constant. 
The number of true positives, however, is a random variable.
Its expected value is equal to the sum of the probabilities that 
each example predicted positive actually is positive:
\[ 
\mathbb{E}(F1) = \frac{2\sum_{i=1}^{c} b}{a + c} = \frac{2c \cdot b}{a + c}
\]
where $b = a/n$ is the base rate.
To maximize this expectation as a function of $c$, 
we calculate the partial derivative with respect to $c$, applying the product rule:
\begin{eqnarray} 
\frac {\partial }{\partial c}\mathbb{E}(F1) &=& \frac{\partial}{\partial c}\frac{2c \cdot b}{a + c} 
=  \frac{2b}{a+c}- \frac{2c \cdot b}{(a+c)^2}. \nonumber
\end{eqnarray}
Both terms in the difference are always positive, 
so we can show that this derivative is always positive by showing that
$$
\frac{2b}{a+c} > \frac{2c \cdot b}{(a+c)^2}.
$$
Simplification gives the condition $1 > \frac{c}{a+c}$.
As this condition always holds, the derivative is always positive. 
Therefore, whenever the frequency of actual positives in the test set is nonzero, 
and the classifier is uninformative, expected F1 is maximized by predicting that all examples are positive.
\end{proof}

For low base rates an optimally thresholded uninformative classifier achieves $\mathbbm{E}(F1)$ close to 0, 
while for high base rates $\mathbbm{E}(F1)$ is close to 1 (Figure~\ref{fig:uninformative}). 
We revisit this point in the context of macro F1.

\section{Multilabel Setting} 

Different metrics are used to measure different aspects of a system's performance. 
However, by changing the loss function, this can change the optimal predictions.  
We relate the batch observation to discrepancies between predictions optimal for micro and macro F1. 
We show that while micro F1 is dominated by performance on common labels, 
macro F1 disproportionately weights rare labels.  
Additionally, we show that macro averaging over F1 can conceal uninformative classifier thresholding. 

%\subsection{Micro F1 Emphasizes Common Label Predictions}

Consider the equation for F1, and imagine $tp$, $fp$, and $fn$ to be known 
for $m-1$ labels with some distribution of base rates. 
Now consider the $m$th label to be  rare with respect to the distribution. 
A perfect classifier increases $tp$ by a small amount $\varepsilon$ 
equal to the number $b \cdot n$ of actual positives for that rare label, 
while contributing nothing to the counts $fp$ or $fn$:
\[ 
F1' = \frac{2(tp + b \cdot n)}{2(tp + b \cdot n) + fp + fn}.
\]
%As base rate approaches 0, $\varepsilon$ approaches 0 and thus perfect performance on the rare label does not change the F1 score.
%\[ \lim_{b \to 0} \frac{2(tp + \varepsilon)}{2(tp + \varepsilon) + fp + fn} = \frac{2tp}{2tp + fn + fp}  \]
On the other hand, a trivial prediction of all negative only increases $fn$ by a small amount:
\[ 
F1'' = \frac{2tp}{2tp + fp + (fn+ b \cdot n)} .
 \]
By contrast, predicting all positive for a rare label will increase $fp$ by a large amount $\beta = n - \varepsilon$. 
We have
\[ \frac{F1'}{F1''}=\frac{1+\frac{b \cdot n}{tp}}{1 + \frac{nb}{a+c+b \cdot n}}.
\]
where $a$ and $c$ are the number of positives in the gold standard and the number of positive predictions for the first $m-1$ labels. We have $a+c \leq n \sum_{i} b_{i}$ and so if $b_{m} \ll \sum_{i} b_{i}$ this ratio is small. Thus, performance on rare labels is washed out.

In the single-label setting, the small range between the F1 value achieved by a trivial classifier and a perfect one
may not be problematic. 
If a trivial system gets a score of 0.9, we can adjust the scale for what constitutes a good score. 
However, when averaging separately calculated F1 over all labels, 
this variability can skew scores to disproportionately weight performance on rare labels. 
Consider the two label case when one label has a base rate of 0.5 and the other has a base rate of 0.1. 
The corresponding expected F1 for trivial classifiers are 0.67 and 0.18 respectively. 
Thus the expected F1 for optimally thresholded trivial classifiers is  0.42. 
However, an improvement to perfect predictions on the rare label elevates the macro F1 to 0.84 
while such an improvement on the common label would only correspond to a macro F1 of 0.59. 
Thus the increased variability of F1 results in high weight for rare labels in macro F1. 

For a rare label with an uninformative classifier, micro F1 is optimized by predicting all negative while macro is optimized by predicting all positive. 
Earlier, we proved that the optimal threshold for predictions based on a calibrated probabilistic classifier 
is half of the maximum F1 attainable given any threshold setting. 
In other words, which batch an example is submitted with affects whether a positive prediction will be made. 
In practice, a system may be tasked with predicting labels with widely varying base rates. 
Additionally a classifier's ability to make confident predictions may vary widely from label to label. 

Optimizing micro F1 as compared to macro F1 can be thought of as 
choosing optimal thresholds given very different batches. 
If the base rate and distribution of probabilities assigned to instances vary from label to label, 
so will the predictions. 
Generally, labels with low base rates and less informative classifiers will be over-predicted 
to maximize macro F1 as compared to micro F1. 
We present empirical evidence of this phenomenon in the following case study.

\section{Case Study}

This section discusses a case study that demonstrates how in practice, 
thresholding to maximize macro-F1 can produce undesirable predictions. 
To our knowledge, a similar real-world case of pathological behavior
has not been previously described in the literature,
even though macro averaging F1 is a common approach.

We consider the task of assigning tags from a controlled vocabulary of 26,853 MeSH terms 
to articles in the biomedical literature using only titles and abstracts. 
We represent each abstract as a sparse bag-of-words vector over a vocabulary of 188,923 words. 
The training data consists of a matrix $A$ with $n$ rows and $d$ columns, 
where $n$ is the number of abstracts and $d$ is the number of features in the bag of words representation. 
We apply a tf-idf text preprocessing step to the bag of words representation 
to account for word burstiness \cite{mke05}
and to elevate the impact of rare words.    

Because linear regression models can be trained for multiple labels efficiently, 
we choose linear regression as a model. 
Note that square loss is a proper loss function and does yield calibrated probabilistic predictions
\cite{mjveo12}.
Further, to increase the speed of training and prevent overfitting, 
we approximate the training matrix $A$ by a rank restricted $A_{k}$ using singular value decomposition. 
One potential consequence of this rank restriction is that the signal of extremely rare words can be lost. 
This can be problematic when rare terms are the only features of predictive value for a label.

Given the probabilistic output of the classifier 
and the theory relating optimal thresholds to maximum attainable F1, 
we designed three different plug-in rules to maximize micro, macro and per instance F1. 
Inspection of the predictions to maximize micro F1 revealed no irregularities. 
However, inspecting the predictions thresholded to maximize performance on macro F1
showed that several terms with very low base rates were predicted for more than a third of all test documents.
Among these terms were ``Platypus", ``Penicillanic Acids" and ``Phosphinic Acids" (Figure~\ref{fig:MeSH}). 

\begin{figure}[t]
\centering
	\begin{tabular}{ | l | c | r | r |}
	 \hline
	 MeSH Term & Count & Max F1 & Threshold \\
	  \hline      
 		Humans & 2346 & 0.9160 & 0.458\\
   		Male & 1472 & 0.8055 & 0.403\\
		Female & 1439 & 0.8131& 0.407\\
		\textbf{Phosphinic Acids} & \textbf{1401} & ${1.544 \cdot 10^{-4}}$ & ${7.71 \cdot 10^{-5}}$\\
		\textbf{Penicillanic Acid} & \textbf{1064} & ${8.534 \cdot 10^{-4}}$ & ${4.27 \cdot 10^{-4}}$\\
		Adult & 1063 & 0.7004  & 0.350\\
		Middle Aged & 1028 & 0.7513 & 0.376\\
		\textbf{Platypus} & \textbf{980} & ${4.676 \cdot 10^{-4}}$ & ${2.34 \cdot 10^{-4}}$ \\
	  \hline  
	\end{tabular}
	\caption{Frequently predicted MeSH Terms. When macro F1 is optimized, low thresholds are set for rare labels (bold) with  uninformative classifiers.}
	\label{fig:MeSH}
\end{figure}

%\subsection{``The Platypus Problem": Macro F1 Conceals Extreme Thresholding}

In multilabel classification, a label can have low base rate and an uninformative classifier. 
In this case, optimal thresholding requires the system to predict all examples positive for this label. 
In the single-label case, such a system would achieve a low F1 and not be used. 
But in the macro averaging multilabel case, 
the extreme thresholding behavior can take place on a subset of labels, 
while the system manages to perform well overall. 

\section{A Winner's Curse}

In practice, decision regions that maximize F1 are often set experimentally, rather than analytically. 
That is, given a set of training examples, 
their scores and ground truth decision regions for scores that map to different labels 
are set that maximize F1 on the training batch.

In such situations, the optimal threshold can be subject to a winner's curse \cite{capen1971competitive}
where a sub-optimal threshold is chosen because of sampling effects or limited training data. 
As a result, the future performance of a classifier using this threshold is less than the empirical performance. 
We show that threshold optimization for F1 is particularly susceptible to this phenomenon
(which is a type of overfitting).

In particular, different thresholds have different rates of convergence of estimated F1 with number of samples $n$. 
As a result, for a given $n$, comparing the empirical performance of low and high thresholds 
can result in suboptimal performance. 
This is because, for a fixed number of samples, some thresholds converge to their true error rates 
while others have higher variance and may be set erroneously. 
We demonstrate these ideas for a scenario with an uninformative model, though they hold more generally. 

Consider an uninformative model, for a label with base rate $b$. 
The model is uninformative in the sense that output scores are $s_{i} = b+n_{i}\ \forall\ i$, 
where $n_{i}=\mathcal{N}(0,\sigma^{2})$. 
Thus, scores are uncorrelated with and independent of the true labels. 
The empirical accuracy for a threshold $t$ is 
\begin{equation}
A^{t}_{exp} = \frac{1}{n} \sum_{i \in +} \mathbf{1} [S_{i} \geq t] + \frac{1}{n} \sum_{i \in -} \mathbf{1} [S_{i} \leq t] \label{eq:iid}
\end{equation}
where $+$ and $-$ index the positive and negative class respectively. 
Each term in Equation~(\ref{eq:iid}) is the sum of $O(n)$ i.i.d random variables 
and has exponential (in $n$) rate of convergence to the mean 
irrespective of the base rate $b$ and the threshold $t$. 
Thus, for a fixed number $T$ of threshold choices, 
the probability of choosing the wrong threshold $P_{err}\leq T 2^{-\epsilon n}$ 
where $\epsilon$ depends on the distance between the optimal and next nearest threshold. 
Even if errors occur the most likely errors are thresholds close to the true optimal threshold 
(a consequence of Sanov's Theorem \cite{cover2012elements}).

Consider how F1-maximizing thresholds would be set experimentally, 
given a training batch of independent ground truth and scores from an uninformative classifier. 
The scores $s_{i}$ can be sorted in decreasing order (w.l.o.g.) 
since they are independent of the true labels for an uninformative classifier. 
Based on these, we empirically select the threshold that maximizes F1 on the training batch. 
The optimal empirical threshold will lie between two scores that include the value $\frac{F1}{2}$, 
when the scores are calibrated, in accordance with Theorem~\ref{th:maxf1}. 

The threshold $s_{min}$ that classifies all examples positive 
(and maximizes $F1$ analytically by Theorem~\ref{th:uninformative}) 
has an empirical F1 close to its expectation of $\frac{2b}{1+b}=\frac{2}{1 + {1}/{b}}$ 
since $tp$, $fp$ and $fn$ are all estimated from the entire data. 
Consider the threshold $s_{max}$ that classifies only the first example positive and all others negative. 
With probability $b$, this has F1 score ${2}/({2+b \cdot n})$,
which is lower than that of the optimal threshold only when
$$
b \geq  \frac{\sqrt{1+\frac{8}{n}}-1}{2}. 
$$
Despite the threshold $s_{max}$ being far from optimal, 
it has a constant probability of having a higher F1 on training data, 
a probability that does not decrease with $n$, for $n < ({1-b})/{b^2}$. 
Therefore, optimizing F1 will have a sharp threshold behavior, 
where for $n < ({1-b})/{b^2}$ the algorithm will identify large thresholds with constant probability, 
whereas for larger $n$ it will correctly identify small thresholds. 
Note that identifying optimal thresholds for $F1$ is still problematic 
since it then leads to issue identified in the previous section. 
While these issues are distinct, 
they both arise from the nonlinearity of  F1 score and its asymmetric treatment of positive and negative labels.

We simulate this behavior, executing 10,000 runs for each setting of the base rate, 
with $n=10^6$ samples for each run to set the threshold (Figure~\ref{fig:winner}). 
Scores are chosen using variance $\sigma^{2}=1$. 
True labels are assigned at the base rate, independent of the scores. 
The threshold that maximizes F1 on the training set is selected. 
We plot a histogram of the fraction predicted positive as a function of the empirically chosen threshold. 
There is a shift from predicting almost all positives to almost all negatives as base rate is decreased. 
In particular, for low base rate $b$, even with a large number of samples, 
a small fraction of examples are predicted positive. 
The analytically derived optimal decision in all cases is to predict all positive,
i.e.~to use a threshold of 0. 

\begin{figure}[t]
\begin{center}
	\includegraphics[width=4in]{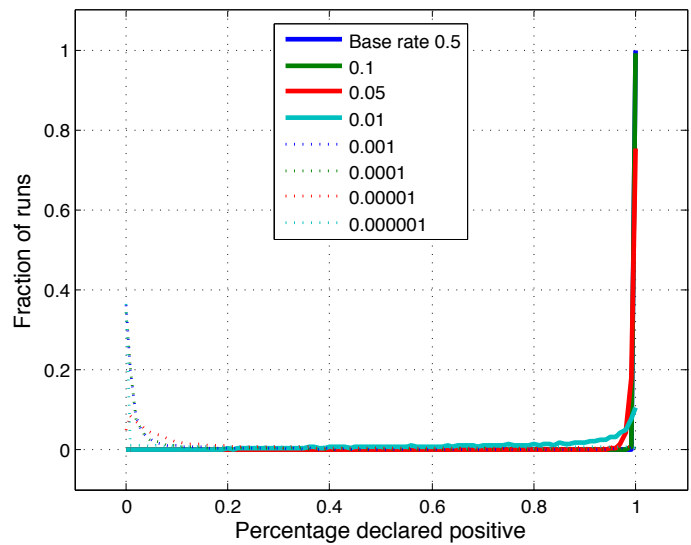}
	\caption{The distribution of experimentally chosen thresholds changes with varying $b$. 
	For small $b$, a small fraction of examples are predicted positive 
	even though the optimal thresholding is to predict all positive.}
	\label{fig:winner}
\end{center}
\end{figure}

%\section{Macro Skill Score for Multilabel Binary Classification}
\section{Discussion}
In this paper, we present theoretical and empirical results describing the properties of the F1 performance metric for multilabel classification. We relate the best achievable F1 score to the optimal decision-making threshold and show that when a classifier is uninformative, predicting all instances positive maximizes the expectation of F1. Further, we show that in the multilabel setting, this behavior can be problematic when the metric to maximize is macro F1 and for a subset of rare labels the classifier is uninformative. In contrast, we demonstrate that given the same scenario, expected micro F1 is maximized by predicting all examples to be negative. This knowledge can be useful as such scenarios are likely to occur in settings with a large number of labels. We also demonstrate that micro F1 has the potentially undesirable property of washing out performance on rare labels. 

No single performance metric can capture every desirable property. For example, separately reporting precision and recall is more informative than reporting F1 alone. Sometimes, however, it is practically necessary to define a single performance metric to optimize. Evaluating competing systems and objectively choosing a winner presents such a scenario. In these cases, a change of performance metric can have the consequence of altering optimal thresholding behavior.
% work on this! end stronger, say something more interesting about skill score %

%Skill score is a monotonically increasing function of accuracy. As a result, the same optimal threshold (.5 for calibrated probabilities) applies. Further, skill score has the advantage over accuracy that the variability in skill score between an optimally thresholded trivial classifier and a perfect one is equal for all labels, regardless of the base rate. For this reason, macro averaging skill score accomplishes one ``goal" of macro F1, without altering threshold incentives. 
%Therefore, to weigh the performance on each label equally, we recommend macro skill score. 
%This metric is useful in the multilabel setting because it scales performance equally across labels, something distinct from average accuracy, without altering optimal thresholding behavior. 
%
%Converting probabilistic predictions to binary classifications is lossy. Some metrics, like AUROC operate on rankings, not binary classifications. AUROC adds the property that a trivial classifier achieves a score of .5 for any base rate. When it is possible to evaluate ranks and not binary predictions, average AUROC across labels may be preferable to macro F1.

\bibliography{sources}
\bibliographystyle{splncs03}

\end{document}